\documentclass[sigconf]{acmart}
\usepackage{courier}  
\usepackage{graphicx} 
\urlstyle{rm} 
\usepackage{natbib}  
\usepackage{caption} 
\frenchspacing  
\setlength{\pdfpagewidth}{8.5in} 
\setlength{\pdfpageheight}{11in} 
%

\usepackage{multirow}
\usepackage{soul}
\usepackage{color}
\usepackage{arydshln} 
\usepackage{amsthm}
\usepackage{algorithm}
\usepackage[noend]{algpseudocode} 
\usepackage{balance}


\usepackage[switch]{lineno} 

\newtheorem{myTheo}{Theorem}
\newtheorem{myDef}{Definition}
\newtheorem{myAssum}{Assumption}



%
\usepackage{newfloat} 
\usepackage{listings}
\DeclareCaptionStyle{ruled}{labelfont=normalfont,labelsep=colon,strut=off} 
\lstset{%
	basicstyle={\footnotesize\ttfamily},
	numbers=left,numberstyle=\footnotesize,xleftmargin=2em,
	aboveskip=0pt,belowskip=0pt,%
	showstringspaces=false,tabsize=2,breaklines=true}
\floatstyle{ruled}
\newfloat{listing}{tb}{lst}{}
\floatname{listing}{Listing}

\AtBeginDocument{%
  }


\copyrightyear{2025}
\acmYear{2025}
\setcopyright{cc}
\setcctype{by}
\acmConference[KDD '25]{Proceedings of the 31st ACM SIGKDD Conference on Knowledge Discovery and Data Mining V.2}{August 3--7, 2025}{Toronto, ON, Canada}
\acmBooktitle{Proceedings of the 31st ACM SIGKDD Conference on Knowledge Discovery and Data Mining V.2 (KDD '25), August 3--7, 2025, Toronto, ON, Canada}
\acmDOI{10.1145/3711896.3736890}
\acmISBN{979-8-4007-1454-2/2025/08}


\settopmatter{printacmref=true}




\begin{document}

\title{Context Pooling: Query-specific Graph Pooling for Generic Inductive Link Prediction in Knowledge Graphs}


\author{Zhixiang Su}
\affiliation{%
  \institution{Nanyang Technological University}
  \country{Singapore}}
\email{zhixiang002@ntu.edu.sg}

\author{Di Wang}
\affiliation{%
  \institution{Nanyang Technological University}
  \country{Singapore}}
\email{wangdi@ntu.edu.sg}

\author{Chunyan Miao}
\authornote{Corresponding Author}
\affiliation{%
  \institution{Nanyang Technological University}
  \country{Singapore}}
\email{ascymiao@ntu.edu.sg}

\renewcommand{\shortauthors}{Zhixiang Su, Di Wang and Chunyan Miao}

\begin{abstract}
  Recent investigations on the effectiveness of Graph Neural Network (GNN)-based models for link prediction in Knowledge Graphs (KGs) show that vanilla aggregation does not significantly impact the model performance.  In this paper, we introduce a novel method, named Context Pooling, to enhance GNN-based models' efficacy for link predictions in KGs. To our best of knowledge, Context Pooling is the first methodology that applies graph pooling in KGs. Additionally, Context Pooling is first-of-its-kind to enable the generation of query-specific graphs for inductive settings, where testing entities are unseen during training. Specifically, we devise two metrics, namely neighborhood precision and neighborhood recall, to assess the neighbors' logical relevance regarding the given queries, thereby enabling the subsequent comprehensive identification of only the logically relevant neighbors for link prediction. Our method is generic and assessed by being applied to two state-of-the-art (SOTA) models on three public transductive and inductive datasets, achieving SOTA performance in 42 out of 48 settings.
\end{abstract}

\begin{CCSXML}
<ccs2012>
   <concept>
       <concept_id>10010147.10010178.10010187</concept_id>
       <concept_desc>Computing methodologies~Knowledge representation and reasoning</concept_desc>
       <concept_significance>500</concept_significance>
       </concept>
 </ccs2012>
\end{CCSXML}

\ccsdesc[500]{Computing methodologies~Knowledge representation and reasoning}

\keywords{Knowledge Graphs; GNN-based Link Prediction; Inductiveness}


\maketitle
\newcommand\kddavailabilityurl{https://doi.org/10.5281/zenodo.15502768}

\ifdefempty{\kddavailabilityurl}{}{
\begingroup\small\noindent\raggedright\textbf{KDD Availability Link:}\\
The source code of this paper has been made publicly available at \url{\kddavailabilityurl}.
\endgroup
}

\section{Introduction}\label{sec_introduction}
    Knowledge Graphs (KGs) are often used for organizing and representing a vast amount of structured information across diverse domains (e.g., commonsense, medical, and financial data~\cite{krotzsch2018attributed,lin2020kgnn,li2020real}).  A KG, represented as $G(E_G, R_G) = \{(h_i, r_i, t_i) | i = 1, 2, 3, ..., m\}$, is a heterogeneous graph of interconnected entities (heads and tails, $h_i, t_i \in E_G$) and relations ($r_i \in R_G$). In this context, link prediction is a critical process making inference of the missing entity in a given query triple. This involves the prediction of either the tail entity $t$ given $(h,r,?)$ or the head entity $h$ given $(?,r,t)$. To unlock the full potential of KGs, designated methods are required to perform link prediction, improve the completeness of KGs, and facilitate various downstream tasks (e.g., recommendation~\cite{wang2019kgat} and question answering~\cite{huang2019knowledge}). 
    
    \begin{figure}[!t]
        \centering
        \includegraphics[scale=0.10]{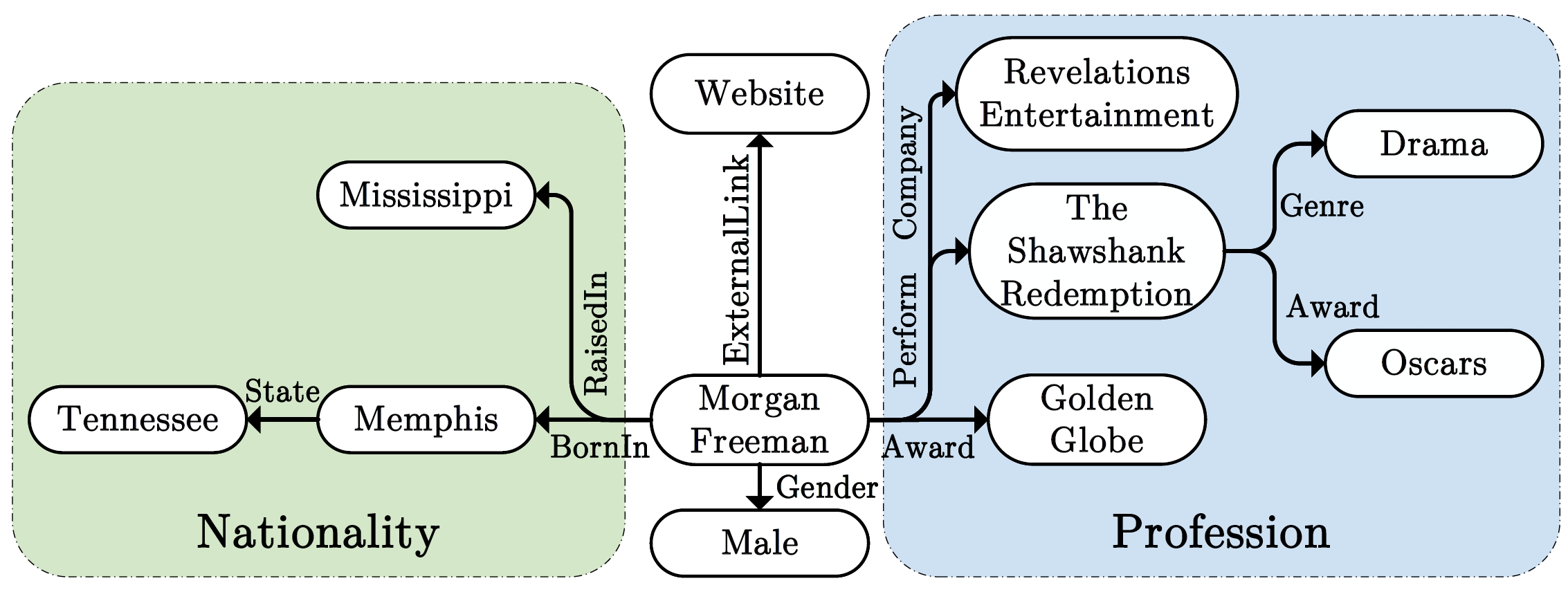}
        \caption{A KG comprising 2-hop neighbors of $\textit{MorganFreeman}$.}\label{fig1}
    \end{figure}
    
    In recent years, Graph Neural Network (GNN)-based models have emerged as a powerful tool for link prediction, especially for homogeneous graphs where entities and relations are of the same types~\cite{wu2020comprehensive}. A typical GNN architecture comprises two key components: \romannumeral1)~Message passing: Entities gather information by aggregating embeddings from their immediate neighboring entities; and \romannumeral2)~Entity updating: Entities adjust their features based on the information collected from neighbors, and possibly their own features.

    Traditionally, KGs are perceived as heterogeneous graphs that include diverse entity and relation types. In line with this perspective, GNN-based models assume that aggregating information from all neighboring nodes is beneficial and apply this principle to KGs~\cite{RGCN,RuleN}. Such practice is on the basis that relation-specific aggregations could be applied to deal with diverse relation types. However, recent studies~(\cite{zhang2022rethinking,li2022graph}) on the effectiveness of GNN components in link prediction posed challenges to this practice. These studies suggested that the aggregation of all neighbors may not significantly impact the performance on commonly used KGs. Surprisingly, replacing aggregation with a simple Multi-Layer Perceptron (MLP) neural network while keeping the other components static can produce comparable results~\cite{zhang2022rethinking}. These findings lead to the need of reexamining the efficacy of utilizing neighbor information in KGs.

    A major contributing factor to the outlined challenge is that when processing a specific query relation, only a limited number of neighbors are logically relevant, while the other neighbors are either irrelevant or illogical for link prediction. For instance, w.r.t Figure~\ref{fig1}, neighbors of \emph{MorganFreeman} can be categorized as follows:

    (\romannumeral1)~Logically relevant: These neighbors provide logically relevant supporting evidence regarding the query triple. In the context of the query (\emph{MorganFreeman}, \emph{Profession}, ?), neighbors connected to relations \emph{Company}, \emph{Perform}, and \emph{Award} are crucial for link prediction.
    
    (\romannumeral2)~Logically irrelevant: These neighbors offer useful information for the query entity \emph{MorganFreeman} but are logically irrelevant to the given query relation. For instance, relations \emph{BornIn} and \emph{RaisedIn} are helpful for the query (\emph{MorganFreeman}, \emph{Nationality}, ?) but unlikely to be useful for (\emph{MorganFreeman}, \emph{Profession}, ?).
    
    (\romannumeral3)~Illogical: These neighbors do not provide useful information for link prediction in the KG. For instance, \emph{ExternalLink} may lead to a website containing valuable semantic knowledge, but the link itself is irrelevant within the scope of link prediction in KGs.
    
     Although GNN-based models can adapt by allocating less attention to logically irrelevant and illogical neighbors, the inherent incompleteness and noise in KGs lead us to believe that selecting only the logically relevant neighbors could yield superior results. Neighbor pooling is a prominent research topic in this context, focusing on the selection of neighbors. However, to our best of knowledge, there does not exist any approach tailored for KGs. It is worth noting that a fraction of KGs are constructed with schema knowledge for selecting neighbors (e.g., YAGO~\cite{YAGO}). However, most KGs do not comprise such explicit information in practice. In this paper, we focus on designing a generic neighbor pooling method that is generally applicable to most GNN-based models and vast majority datasets. Specifically, we propose \textbf{Context Pooling} to address the outlined challenge by enabling GNN-based models to identify logically relevant neighbors. 
     The key properties of Context Pooling are as follows:

    \noindent \textbf{Quantifiable logical relevance:} We devise two metrics for Context Pooling, namely neighborhood precision and neighborhood recall. These two metrics help us to identify logically irrelevant and illogical neighbors by assessing their logical relevance to the query triple.

    \noindent \textbf{Generation of query-specific graphs:} To further avoid the identified neighbors being logically irrelevant, Context Pooling is designed to be query-specific. Starting from the given query head (or tail) entity, we develop algorithms (see Algorithms~\ref{algo_graph_construction} and~\ref{algo_optimized_context_family}) to generate query-specific graphs iteratively.

    \noindent \textbf{Inductive capability:} Recognizing the dynamic nature of real-world KGs, where testing entities are unseen during training, Context Pooling is designed with inductive capabilities, which can effectively deal with unseen entities and generate query-specific graphs for inductive settings.
    
    \noindent \textbf{Generic architecture:} Context Pooling is designed to be generic, so that it can be easily incorporated into existing GNN-based inductive relation prediction methods to further improve their accuracy (see Tables~\ref{table_transductive_relation_prediction} and~\ref{table_inductive_relation_prediction}).

    The key contributions of this work are as follows:
    
    (\romannumeral1)~We introduce Context Pooling, a novel and generic methodology to elevate the effectiveness of GNN-based link prediction models in KGs by identifying and utilizing logically relevant neighbors for specific queries. To our best of knowledge, Context Pooling is first-of-its-kind to apply graph pooling in KGs, which enables relatively lightweight generation of query-specific graphs for inductive settings.

    (\romannumeral2)~We devise two innovative metrics for Context Pooling, namely neighborhood precision and neighborhood recall, to identify neighbors of high logical relevance.

    (\romannumeral3)~We conduct transductive and inductive experiments to assess the effectiveness of Context Pooling. The experimental results show significant improvements in link prediction tasks across various KG datasets, achieving state-of-the-art (SOTA) performance in 42 out of 48 settings.

\section{Related Work}\label{sec_related_work}
    In this section, we first review GNN-based models for link prediction in KGs. Then, we introduce graph pooling, a key technique in homogeneous GNNs.
    
    \subsection{GNN-Based Link Prediction Models}

    GNN-based models applied in KGs utilize the graph's structure and the inherent knowledge within it. These models improve the representations of entities by aggregating and updating them using the embeddings of neighboring entities. The typical procedure comprises the following steps for each layer $l$ in the network:
    \begin{equation*}
    \begin{aligned}
        X^{l},W^{l}=\textit{Update}(\textit{Aggregation}(X^{l-1},W^{l-1}),X^{l-1},W^{l-1}), 
        \\l=1,2,...,L.
    \end{aligned}
    \end{equation*}
    where $X$ and $W$ denote the embedding matrices of entities and relations, respectively.
    
    For link prediction, the final layer embeddings $X^L$ and $W^L$ are utilized, applying a scoring function as follows:
    \begin{equation*}
        \textit{S}_{(h,r,t)}=\textit{Score}(X_h^L,W_r^L,X_t^L).
    \end{equation*}

    Typically, GNN-based models generate embeddings for entities and can well handle seen entities. Therefore, prior studies in this field primarily focus on transductive settings, where all entities are seen during training (e.g., pLogicNet ~\cite{pLogicNet} and DPMPN~\cite{DPMPN}).

    To extend relevant GNN methods to perform inductive link prediction, where testing entities are unseen during training, various strategies are used. For instance, a number of methods (e.g., GRAIL~\cite{GRAIL}) utilize local subgraphs to represent the relative positions of entities, creating initial embeddings for new entities based on their relative positions. Other methods (e.g., NBFNet~\cite{NBFNet} and RED-GNN~\cite{RED-GNN}) incorporate path information into GNNs, focusing on relation sequences that are also present in the testing graph, enabling processing without pre-existing entity embeddings.

    For prior studies afore-reviewed, neighbor information is considered to be effectively utilized by aggregation. However, recent studies~(\cite{zhang2022rethinking,li2022graph}) suggest that aggregation does not significantly impact the model performance. Therefore, in this paper, we introduce a novel approach named Context Pooling, to effectively utilize neighbor information in KGs.
    
    \subsection{Graph Pooling}\label{sec_rel_graph_pooling}

    Graph pooling plays an essential role in GNNs, condensing and streamlining the extensive information contained within large, intricate graphs. This process is comparable to pooling in standard neural networks, and is purposely adapted to handle the features in graph data structures. The graph pooling techniques in GNNs fall into two major categories~\cite{liu2022graphpooling}: flat pooling and hierarchical pooling.

    Flat pooling methods (e.g., sum-pooling and average-pooling) operate on the entire graph in a single step~\cite{duvenaud2015convolutional,xu2018powerful}. These methods do not change the graph's structure but rather aggregate information across the graph.
    In contrast, hierarchical pooling gradually reduces graph size, producing a tiered representation of the graph. This category generally encompasses two types: node cluster pooling and node drop pooling. The former (e.g., DiffPool~\cite{DiffPool} and MemPool~\cite{MemPool}) groups nodes into clusters and then pools the features within each cluster. Meanwhile, the latter (e.g., TopKPool~\cite{TopKPool} and IPool~\cite{IPool})  involves selectively dropping nodes based on certain criteria, effectively simplifying the graph.

    Our innovative Context Pooling method aligns with the node drop pooling approach, specifically designed to efficiently select neighboring nodes in KGs.

\section{Preliminary}
    Before introducing Context Pooling, we first present its preliminaries, including inductive link prediction, neighboring relation, and heterogeneous graph pooling.
    \begin{myDef}[Inductive Link Prediction]\label{def_inductive}
    Given a training KG $G_{\textit{train}} = (E_{G_{\textit{train}}}, R_G)$, a testing KG $G_{\textit{test}} = (E_{G_{\textit{test}}}, R_G)$, the query $(h_q,r_q,t_q)$ is inductive if:

    1) $E_{G_\textit{train}} \cap E_{G_\textit{test}}= \emptyset, h_q, t_q \in E_{G_\textit{test}}$,

    2) $R_{G_\textit{test}} \subseteq R_{G_\textit{train}}, r_q \in R_{G_\textit{train}}$.
    \end{myDef}
    
    Comparing to transductive link prediction, where the entities in $G_{\textit{test}}$ are a subset of those in $G_{\textit{train}}$ (i.e., $E_{G_{\textit{test}}} \subseteq  E_{G_{\textit{train}}}$), inductive prediction is designed to extend to entirely new, unseen entities~\cite{KRST}.

    \begin{myDef}[Neighboring Relation]
    Given a KG $G(E_G, R_G)$, the neighboring relation set of an entity $e$ is defined as follows:
    \begin{equation*}
        \textit{NR}_e=\{r'| \exists e_0 \in E_G, (e,r',e_0) \in G\}.
    \end{equation*}
    The neighboring relation set of a relation $r$ is defined as follows:
    \begin{equation*}
        \textit{NR}_r=\{r'| \exists e_1,e_2,e_3 \in E_G, (e_1,r',e_2)\in G \land (e_1,r,e_3) \in G\}.
    \end{equation*}
    \end{myDef}
    It is important to note that the concept of the neighboring relation set varies depending on the context. When discussing an entity $e$, it refers to the relations on the edges that connect to $e$. In the case of a relation $r$, it refers to other relations that share the same connected entity with $r$ in $G$.
    
    \subsection{Heterogeneous Graph Pooling}\label{sec_graph_pooling}
    Graph pooling methods were initially developed for homogeneous graphs. To extend these methods to KGs, we introduce the concept of heterogeneous graph pooling.
    \begin{myDef}[Heterogeneous Graph Pooling]
    \label{def_graph_pooling}
    Given a KG represented as $G(E_G, R_G)$, the graph pooling operation, denoted as $\textit{Pool()}$, transforms $G$ into a new, pooled graph $G'(E'_G, R'_G)$ in the following manner:
    \begin{equation*}
        G'(E'_G, R'_G)=\textit{Pool}(G(E_G, R_G)),
    \end{equation*}
    where $|E'_G| \leq |E_G|$ and $|R'_G| \leq |R_G|$.
    
    \end{myDef}

    The primary goal of graph pooling is to effectively reduce the number of nodes in a graph while preserving its essential structural information. Node drop pooling is a commonly applied method in this context, which is composed of the following three key components~\cite{liu2022graphpooling}:

    (\romannumeral1)~Score generator: The score generator evaluates each node's significance within the input graph using the feature matrix $X^l$ and the adjacency matrix $A^l$:
    \begin{equation*}
        S^l = \textit{Score}(X^l,A^l).
    \end{equation*}

    (\romannumeral2)~Node selector: The node selector then identifies and selects the nodes that have the highest significance scores, often referred to as the $\textit{Top}_k$ nodes as follows:
    \begin{equation*}
        \textit{idx}^{l+1} = \textit{Top}_k(S^l).
    \end{equation*}
    
    (\romannumeral3)~Graph Coarsening~\cite{cai2021graph}: Using the selected nodes, this process creates a coarsened version of the original graph as follows:
    \begin{equation*}
        X^{l+1}, A^{l+1} = \textit{Coarsen}(X^l, A^l, S^l, \textit{idx}^{l+1}).
    \end{equation*}

    However, the node drop pooling method faces challenges when being applied to inductive link prediction in KGs. Specifically, it struggles with unseen testing entities due to the difficulty in assigning feature matrices. Additionally, it requires training, which can be too costly and complex for large KGs, leading to difficulties in model convergence.

    To address these issues, we introduce Context Pooling, conceptually in line with node drop pooling. Context Pooling is designed to handle inductive settings and does not require training, offering a more efficient and generally applicable solution for graph pooling in KGs.
    
\section{Methodology}
    In this section, we first define two key metrics: neighborhood precision and neighborhood recall, which serve to quantify the logical relevance between neighbors and query relations. Subsequently, we introduce an iterative algorithm designed to construct a query-specific graph using logically relevant neighbors. Furthermore, we refine the iterative algorithm to be applicable to large KGs.  Lastly, we delineate how to enhance the commonly applied GNN-based models by integrating our Context Pooling technique for link predictions.

    \subsection{Logical Relevance Quantification}\label{sec_logical_relevance_quantification}

    Extending the discussion in Section~\ref{sec_introduction}, logically relevant neighbors provide supporting evidence for the query triple. However, relying on a singular neighbor often falls short in substantiating a specific relationship. For example, in Figure~\ref{fig1}, using just one neighbor (\emph{MorganFreeman}, \emph{Company}, \emph{RevelationsEntertainment}) does not conclusively reveal \emph{MorganFreeman}'s profession, considering a company has various positions. It is intuitively appropriate to involve a set of neighbors from all-round perspectives to achieve a more thorough understanding (see Definition~\ref{def_context_neighbor_family}).

    \begin{myDef}[Context Neighbor Family  (CNF)]\label{def_context_neighbor_family}
        Given a KG $G$ and a query relation $r$, the corresponding context neighbor family $\textit{CNF}()$ is defined as follows:
        \begin{equation}
            \textit{CNF}(r)= \{\textit{NR}'|\textit{Rel}(\textit{NR}',r)>R_0, \textit{NR}' \subseteq \textit{NR}_r \},
        \end{equation}
        where  $\textit{Rel}()$ is a scoring function to quantify logical relevance  and $R_0$ is a predefined threshold.
    \end{myDef}

    We conceptualize $\textit{CNF}$ as a family of sets, where each set has a relevance score ($\textit{Rel}()$) surpassing the threshold ($R_0$), qualifying them as both logically relevant and comprehensive. By using the term comprehensive, we highlight the importance of having an adequate number of relations within each individual set that logically substantiate a relationship. Furthermore, $\textit{CNF}()$ is designed to depend only on relations, which allows it to be effectively used in inductive scenarios where entities are unseen during training.
           
    To assess the logical relevance of these neighbor sets, we devise two metrics in Definitions~\ref{def_precision} and~\ref{def_recall}, respectively.

    \begin{myDef}[Neighborhood Precision]\label{def_precision}
    Given a KG $G(E_G, R_G)$, a query relation $r$ and a neighboring relation set $\textit{NR}'$, neighborhood precision is defined as follows:
    \begin{equation}
        \textit{Rel}^{\textit{pre}}(\textit{NR}',r)=\frac{|\{e_1^{\textit{pre}}| \textit{NR}' \cup \{ r \} \subseteq \textit{NR}_{e_1^{\textit{pre}}},e_1^{\textit{pre}} \in E_G\}| }{|\{e_2^{\textit{pre}}| \textit{NR}' \subseteq \textit{NR}_{e_2^{\textit{pre}}},e_2^{\textit{pre}} \in E_G\}|},
    \end{equation}
    where $|\cdot|$ denotes the set size.
        
    \end{myDef}

    Neighborhood precision quantifies how frequently the query relation $r$ appears in the neighborhood of entities having the neighboring relations $\textit{NR}'$. In practice, a set of comprehensive neighbors should exhibit high precision in $G_{\textit{train}}$, as the presence of these neighboring relations strongly suggests the likelihood of the query relation's existence.

    \begin{myDef}[Neighborhood Recall]\label{def_recall}
    Given a KG $G(E_G, R_G)$, a query relation $r$ and a neighboring relation set $\textit{NR}'$, neighborhood recall is defined as follows:
    \begin{equation}
        \textit{Rel}^{\textit{rec}}(\textit{NR}',r)=\frac{|\{e_1^{\textit{rec}}| \textit{NR}' \cup \{ r \} \subseteq \textit{NR}_{e_1^{\textit{rec}}},e_1^{\textit{rec}} \in E_G\}| }{|\{e_2^{\textit{rec}}| \{r\} \subseteq \textit{NR}_{e_2^{\textit{rec}}},e_2^{\textit{rec}} \in E_G\}|}.
    \end{equation}
    \end{myDef}

    Neighborhood recall assesses the frequency from a different perspective: It measures how often the neighbors $\textit{NR}'$ appear when the relation $r$ is present in the neighborhood. Practically, a set of logically relevant neighbors should demonstrate high recall in $G_{\textit{train}}$, indicating that when the query relation is observed in the neighborhood, these specific neighboring relations are also likely to be found.
 
    The procedure of generating a $\textit{CNF}$, as outlined in Algorithm~\ref{algo_context_familiy}, comprises the following key steps. Initially, we execute $\textit{CNFTrain}()$ on $G_{\textit{train}}$. $\textit{CNFTrain}()$ begins with the collation of the co-occurrence frequency between relations and sets of neighboring relations (see Lines~\ref{alg_line_count_begin}-\ref{alg_line_count_end}). Subsequently, $\textit{CNFTrain}()$ creates a power-set for each possible set of neighboring relations and assesses if their $\textit{Rel}()$ value exceeds a predefined threshold $R_0$ (see Lines~\ref{alg_line_CNF_begin}-\ref{alg_line_CNF_end}). Upon completing $\textit{CNFTrain}()$, we employ $\textit{CNFGenerate}()$ to identify neighbors for a specific query relation by comparing the current neighboring relations with those in $\textit{CNF}$ and selecting the one that shows the highest similarity among the identified logically relevant neighbors.

    \subsection{Query-specific Graph Construction}\label{sec_query_specific_graph_construction}

    In GNN-based models, which typically need multi-hop neighbor aggregation, constructing a query-specific graph using neighbors is crucial after defining $\textit{CNF}$. Our approach includes an iterative algorithm for this task, detailed in Algorithm~\ref{algo_graph_construction}.  Here, we focus on head entity queries because tail entity queries $(?,r,t)$ are equivalent to $(t, r_{\textit{inv}}, ?)$, where the subscript $\textit{inv}$ denotes the inversion of relation. Starting with a query $(h,r,?)$, we initially set $G_{\textit{context}}^{0}=\{(?,r_{\textit{inv}},h)\}$. We then iteratively produce $G_{\textit{context}}^{1}, G_{\textit{context}}^{2}, ..., G_{\textit{context}}^{L}$ for the required $L$-hop neighbors. 
    \begin{algorithm}[!t]
        \caption{CNF Generator}
        \label{algo_context_familiy}
        \begin{algorithmic}[1]
        \Procedure{$\textit{CNFTrain}$}{$G, \textit{Rel}(), R_0$}
            \State $\textit{r2N} = \textit{Dictionary}(\textit{Dictionary}())$
            \State $\textit{CNF} = \textit{Dictionary}(\textit{Set}())$
            \For{$e_i \in E_G$} \label{alg_line_count_begin}
                \State $\textit{NR}_{e_i} = \{r_j \mid (e_i, r_j, e_k) \in G, e_k \in E_G\}$
                \For{$r_i \in \textit{NR}_{e_i}$}
                    \State $\textit{r2N}[r{_i}][\textit{NR}_{e_i}] += 1$
                \EndFor
            \EndFor \label{alg_line_count_end}
    
            \For {$r_i \in R_G$} \label{alg_line_CNF_begin}
                \For{$\textit{NR}_i \in \textit{r2N}[r_i]$}
                    \State ${\textit{NF}= \textit{PowerSet}(\textit{NR}_i} - \{r_i\})$\label{alg_line_powerset}
                    \For{$\textit{NR}' \in \textit{NF}$}
                        \If{$\textit{Rel}(\textit{NR}',r_i)>R_0$}
                            \State $\textit{CNF}[r_i] = \textit{CNF}[r_i]+ \{\textit{NR}'\}$
                        \EndIf
                    \EndFor
                \EndFor
            \EndFor \label{alg_line_CNF_end}
            \State \textbf{return} $\textit{CNF}$
        \EndProcedure
        \Procedure{$\textit{CNFGenerate}$}{$G, \textit{CNF}, h, r$}
            \State $\textit{NR}_h = \{r_j \mid (h, r_j, e_k) \in G, e_k \in E_G\}$
            \State $\textit{MaxSim}=0$
            \State $\textit{NR}_{\textit{max}}=\textit{Set}()$
            \For{$\textit{NR}' \in \textit{CNF}[r]$}\label{alg_line_sim_start}
                \State $\textit{Sim}= \frac{|\textit{NR}' \cap \textit{NR}_h|}{|\textit{NR}' \cup \textit{NR}_h|}$\label{alg_line_sim_end}
                \If{$\textit{Sim}>\textit{MaxSim}$}
                    \State $\textit{MaxSim}=\textit{Sim}$
                    \State $\textit{NR}_{\textit{max}}=\textit{NR}' \cap \textit{NR}_h$
                \EndIf
            \EndFor
            \State \textbf{return} $\textit{NR}_{\textit{max}}$
        \EndProcedure
        \end{algorithmic}
    \end{algorithm}
        \begin{algorithm}[!t]
        \caption{Context Pooling}
        \label{algo_graph_construction}
        \begin{algorithmic}[1]
        \Procedure{$\textit{ContextPooling}$}{$G, G_{\textit{context}}^{l-1},\textit{CNF}$}
        \State $G_{\textit{context}}^{l}= \emptyset$
        \For{$(e^{l-2},r^{l-1},e^{l-1}) \in G_{\textit{context}}^{l-1}$}
            \State $r^{l-1}_{\textit{inv}}=\textit{InverseRelation}(r^{l-1})$
            \State $\textit{NR}_{\textit{max}}^{l}=\textit{CNFGenerate}(G,\textit{CNF},e^{l-1},r^{l-1}_{\textit{inv}})$
            \State $G' = \{(e^{l-1},r^l,e^l)| (e^{l-1},r^l,e^l) \in G \cap r^l \in \textit{NR}_{\textit{max}}^{l}\}$
            \State $G_{\textit{context}}^{l}=G_{\textit{context}}^{l}+G'$
        \EndFor
        \State \textbf{return} $G_{\textit{context}}^{l}$
        \EndProcedure
        \end{algorithmic}
    \end{algorithm}
    The key step of Algorithm~\ref{algo_graph_construction} is the generation of context neighbors for $e^{l-1}$ via $r^{l-1}_{\textit{inv}}$, $\forall (e^{l-2},r^{l-1},e^{l-1}) \in G_{\textit{context}}^{l-1}$. Essentially, $\textit{CNFGenerate}()$ can be regarded as a general procedure for selecting logically relevant and comprehensive neighbors for a target relation. In GNN-based aggregation for $(l-1)$-hop neighbors, the information of $e^{l-1}$ is aggregated through $r^{l-1}$.  Therefore, $r^{l-1}_{\textit{inv}}$ is exactly the relation required for generating $l$-hop neighbors using $\textit{CNFGenerate}()$.
    

    \subsection{Cost-effective Context Pooling}\label{sec_context_pooling}
    In this subsection, we optimize Context Pooling to enhance its adaptability to large KGs. Specifically, we reduce the complexity of generation and querying to $O(|R_G|^2)$ and $O(|\textit{CNF}^{\prime}|)$, respectively. This optimization greatly reduces Context Pooling's complexity, making the bottleneck of whether the extended GNN-based model can deal with large KGs to the model itself.
    
    Not all steps in Algorithms~\ref{algo_context_familiy} and~\ref{algo_graph_construction} are cost-effective when dealing with large KGs. The major bottlenecks are identified in Algorithm~\ref{algo_context_familiy} that 1)~the exponential growth ($O(2^{|R_G|})$) of the power-set size w.r.t the relation set size (see Line~\ref{alg_line_powerset}); and 2)~the computationally expensive task of computing similarity ($O(|\textit{CNF}||R_G|)$) during generation  (see Lines~\ref{alg_line_sim_start}-\ref{alg_line_sim_end}).
        
        \begin{figure*}[!t]
        \centering
        \includegraphics[scale=0.21]{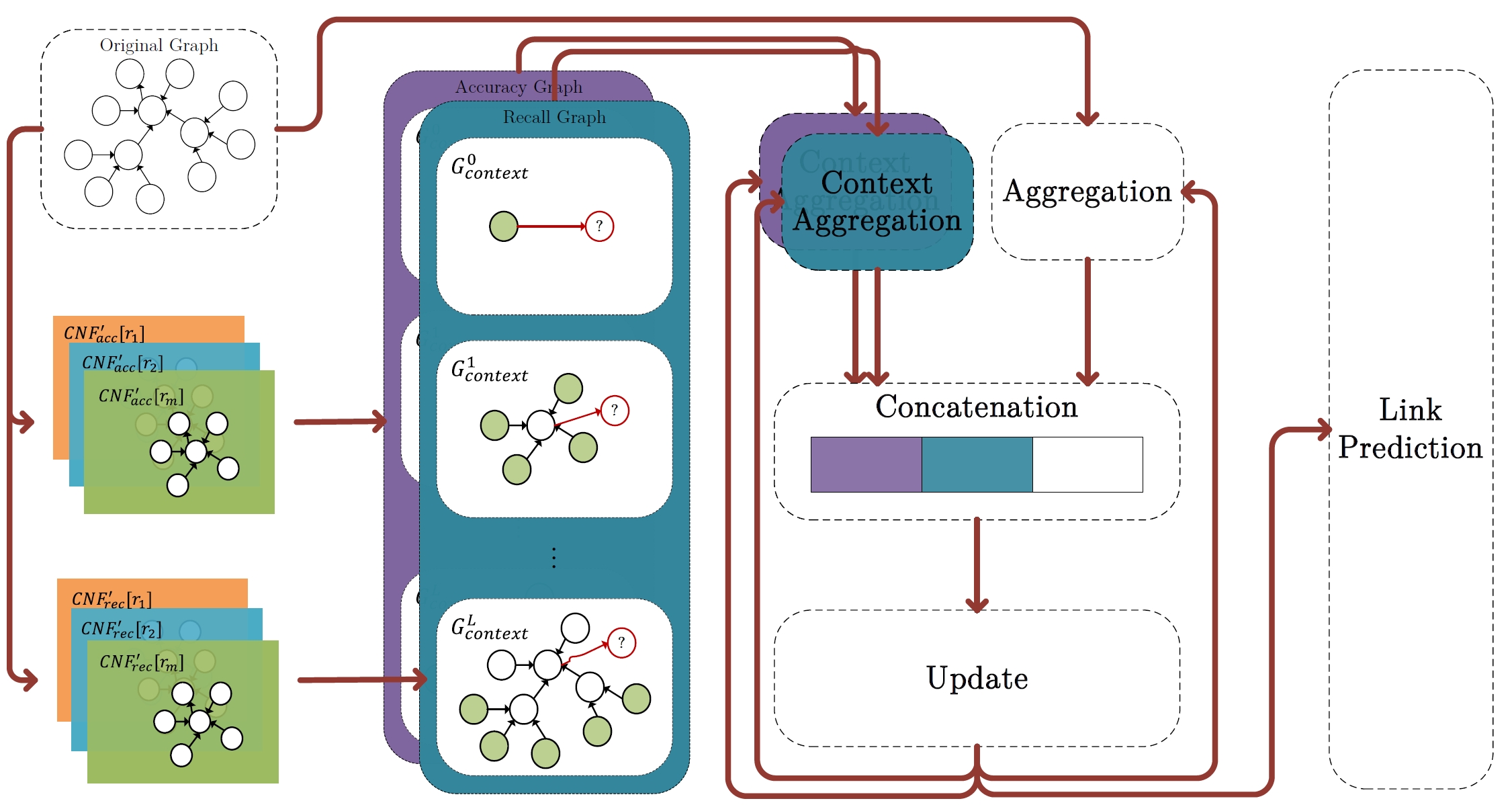}
        \caption{An illustration of Context Pooling GNN architecture.}\label{fig2}
    \end{figure*}

    To make Context Pooling efficient, especially for large KGs, we introduce the conditional independence assumption (Assumption~\ref{assum_independent}).
    This assumption posits that for any given query relation $r$, the occurrence probability of any two non-redundant relations $r_1, r_2 \in \textit{NR}_r$ is both independent and conditionally independent when considering $r$. Utilizing this assumption, we bypass the need to compute relevance scores for the entire power-set. Instead, we can independently compute the score for each relation in the neighborhood. Such alternative implementation streamlines the computational process and significantly reduces its cost.
        
    \begin{myAssum}\label{assum_independent}
    Given a query relation $r$, $\forall r_1, r_2 \in \textit{NR}_r, $ we assume the occurrence probability of $r_1$ and $r_2$ satisfies the following equations:
    \begin{equation}
    \begin{aligned}
        P(r_1,r_2)&=P(r_1)P(r_2), \\
        P(r_1,r_2|r)&=P(r_1|r)P(r_2|r).
    \end{aligned}
    \end{equation}
    \end{myAssum}

    Using Assumption~\ref{assum_independent}, we can adopt a Markov chain–inspired~\cite{norris1998markov} approach to show that when adding a new relation to an existing $\textit{CNF}$, its influence on the overall relevance score can be computed independently (see Theorem~\ref{theo_propto}).
    
    \begin{myTheo}\label{theo_propto}
        Given Assumption~\ref{assum_independent}, for a query relation $r$ and $\forall r_1, r_2 \in \textit{NR}_r$, the relationship between the combined and individual $\textit{Rel}^{\textit{pre}}$ and $\textit{Rel}^{\textit{rec}}$ can be expressed as follows:
        \begin{equation}\label{eq_propto_theo_1}
        \begin{aligned}
            \textit{Rel}^{\textit{pre}}(\{r_1, r_2\},r) \propto \textit{Rel}^{\textit{pre}}(\{r_1\},r)*\textit{Rel}^{\textit{pre}}(\{r_2\},r), \\
            \textit{Rel}^{\textit{rec}}(\{r_1, r_2\},r) \propto \textit{Rel}^{\textit{rec}}(\{r_1\},r)*\textit{Rel}^{\textit{rec}}(\{r_2\},r).
        \end{aligned}
        \end{equation}
        Furthermore, for any subset $\textit{NR}' \subseteq \textit{NR}_r$, the following can be deduced:
        \begin{equation}\label{eq_propto_theo_2}
        \begin{aligned}
            \textit{Rel}^{\textit{pre}}(\textit{NR}',r) \propto \prod_{r_i \in \textit{NR}'} \textit{Rel}^{\textit{pre}}(\{r_i\},r), \\
            \textit{Rel}^{\textit{rec}}(\textit{NR}',r) \propto \prod_{r_i \in \textit{NR}'} \textit{Rel}^{\textit{rec}}(\{r_i\},r).
        \end{aligned}
        \end{equation}
    \end{myTheo}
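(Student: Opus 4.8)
The plan is to reinterpret both metrics as conditional probabilities over a uniformly sampled entity, and then let Assumption~\ref{assum_independent} do the factorization, first for pairs and then by induction for arbitrary subsets. First I would fix notation: for a relation $\rho$, let $A_\rho$ denote the event $\{\rho \in \textit{NR}_e\}$ when $e$ is drawn uniformly at random from $E_G$, so that $P(A_\rho) = |\{e \in E_G : \rho \in \textit{NR}_e\}| / |E_G|$ and, more generally, $P\!\left(\bigcap_{\rho \in \textit{NR}'} A_\rho\right) = |\{e : \textit{NR}' \subseteq \textit{NR}_e\}|/|E_G|$. Then the common factor $|E_G|$ cancels in Definitions~\ref{def_precision} and~\ref{def_recall}, and one reads off
\begin{equation*}
\textit{Rel}^{\textit{pre}}(\textit{NR}',r) = P\!\left(A_r \,\middle|\, \textstyle\bigcap_{\rho \in \textit{NR}'} A_\rho\right), \qquad \textit{Rel}^{\textit{rec}}(\textit{NR}',r) = P\!\left(\textstyle\bigcap_{\rho \in \textit{NR}'} A_\rho \,\middle|\, A_r\right).
\end{equation*}
In this language Assumption~\ref{assum_independent} states exactly that $P(A_{r_1} \cap A_{r_2}) = P(A_{r_1})P(A_{r_2})$ and $P(A_{r_1} \cap A_{r_2} \mid A_r) = P(A_{r_1}\mid A_r)P(A_{r_2}\mid A_r)$ for $r_1, r_2 \in \textit{NR}_r$.

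Next I would dispatch the pairwise claims (\ref{eq_propto_theo_1}). For recall the conditional-independence half of the assumption gives the identity at once: $\textit{Rel}^{\textit{rec}}(\{r_1,r_2\},r) = P(A_{r_1}\cap A_{r_2}\mid A_r) = P(A_{r_1}\mid A_r)\,P(A_{r_2}\mid A_r) = \textit{Rel}^{\textit{rec}}(\{r_1\},r)\,\textit{Rel}^{\textit{rec}}(\{r_2\},r)$, so proportionality holds with constant $1$. For precision I would apply Bayes' rule, $\textit{Rel}^{\textit{pre}}(\{r_1,r_2\},r) = P(A_{r_1}\cap A_{r_2}\mid A_r)\,P(A_r)\,/\,P(A_{r_1}\cap A_{r_2})$, substitute the conditional-independence factorization in the numerator and the unconditional one in the denominator, and compare with $\textit{Rel}^{\textit{pre}}(\{r_i\},r) = P(A_{r_i}\mid A_r)\,P(A_r)/P(A_{r_i})$; this yields $\textit{Rel}^{\textit{pre}}(\{r_1,r_2\},r) = P(A_r)^{-1}\,\textit{Rel}^{\textit{pre}}(\{r_1\},r)\,\textit{Rel}^{\textit{pre}}(\{r_2\},r)$. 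Since $P(A_r)$ depends only on the fixed query relation $r$, this is precisely the asserted $\propto$.

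Finally I would lift to an arbitrary $\textit{NR}' \subseteq \textit{NR}_r$ by induction on $|\textit{NR}'|$, using the pairwise step as the engine: writing $\textit{NR}' = \textit{NR}'' \cup \{r_k\}$, the inductive step requires $\bigcap_{\rho\in\textit{NR}''}A_\rho$ to be independent of $A_{r_k}$ both unconditionally and conditionally on $A_r$. Granting this, recall multiplies out exactly, $\textit{Rel}^{\textit{rec}}(\textit{NR}',r) = \prod_{r_i\in\textit{NR}'}\textit{Rel}^{\textit{rec}}(\{r_i\},r)$, and the same Bayes-plus-factorization computation gives $\textit{Rel}^{\textit{pre}}(\textit{NR}',r) = P(A_r)^{\,1-|\textit{NR}'|}\prod_{r_i\in\textit{NR}'}\textit{Rel}^{\textit{pre}}(\{r_i\},r)$; the prefactor depends only on $r$ and on $|\textit{NR}'|$, not on which neighboring relations occur, so it can be absorbed into the threshold $R_0$ and (\ref{eq_propto_theo_2}) follows.

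I expect the main obstacle to be conceptual rather than computational: Assumption~\ref{assum_independent} as written only asserts \emph{pairwise} (conditional) independence, whereas the induction needs \emph{mutual} (conditional) independence of the family $\{A_\rho : \rho \in \textit{NR}_r\}$. I would therefore either strengthen the assumption to mutual independence (the reading consistent with the Markov-chain intuition invoked before the theorem) or restrict the statement to the $\textit{CNF}$ power-set enumeration where this is taken for granted, and in the write-up be explicit about what the $\propto$ sign is permitted to hide, namely a factor that is a function of $r$ and of $|\textit{NR}'|$ but is constant across the candidate relation sets being scored.
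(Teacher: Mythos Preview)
Your proposal is correct and follows essentially the same route as the paper's proof: interpret the metrics as conditional probabilities, apply Bayes' rule together with Assumption~\ref{assum_independent} to obtain the pairwise identities (with the same $1/P(r)$ prefactor for precision and exact equality for recall), and then invoke induction for the general subset case. Your write-up is in fact more careful than the paper's, which simply states ``utilizing the principle of mathematical induction'' without addressing the pairwise-versus-mutual independence gap you explicitly flag.
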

    

    The proof of Theorem~\ref{theo_propto} is provided in Appendix~\ref{sec_appendix_proof}. Theorem~\ref{theo_propto} allows us to much simplify the generation of $\textit{CNF}$. As shown in~(\ref{eq_propto_theo_2}), sets with high $\textit{Rel}$ values should consist of individual relations also having high $\textit{Rel}$ scores. This insight allows for an independent computation of each relation's score, leading to a more efficient implementation. Building on this theorem, we introduce an optimized version of Algorithm~\ref{algo_context_familiy}, detailed in Algorithm~\ref{algo_optimized_context_family}. The key changes include: 1)~employing $\textit{CNF}^{\prime}$ to archive individual neighboring relations with $\textit{Rel}$ scores surpassing a predetermined threshold ($O(|R_G|^2)$, see Lines~\ref{alg_line_opt_CNF_begin}-\ref{alg_line_opt_CNF_end}); and 2)~omitting the similarity computations for $\textit{CNF}^{\prime}$, while directly forming logically relevant neighbors through set intersection ($O(|\textit{CNF}^{\prime}|)$, see Line~\ref{alg_line_opt_sim}).

\begin{algorithm}[!t]
    \caption{Optimized CNF Generator}
    \label{algo_optimized_context_family}
    \begin{algorithmic}[1]
    \Procedure{$\textit{CNFTrain}^{\prime}$}{$G, \textit{Rel}(), R'_0$}
        \State $\textit{r2N} = \textit{Dictionary}(\textit{Dictionary}())$
        \State $\textit{CNF}^{\prime} = \textit{Dictionary}(\textit{Set}())$
        \For{$e_i \in E_G$} 
            \State $\textit{NR}_{e_i} = \{r_j \mid (e_i, r_j, e_k) \in G, e_k \in E_G\}$
            \For{$r_i \in \textit{NR}_{e_i}$}
                \State $\textit{r2N}[r{_i}][\textit{NR}_{e_i}] += 1$
            \EndFor
        \EndFor 

        \For {$r_i \in R_G$} 
            \For{$\textit{NR}_i \in \textit{r2N}[r_i]$}
                \State ${\textit{NF}'= (\textit{NR}_i} - \{r_i\})$
                \For{$r' \in \textit{NF}'$}\label{alg_line_opt_CNF_begin}
                    \If{$\textit{Rel}({r'},r_i)>R'_0$}
                        \State $\textit{CNF}^{\prime}[r_i] = \textit{CNF}^{\prime}[r_i]+ {r'}$
                    \EndIf
                \EndFor \label{alg_line_opt_CNF_end}
            \EndFor
        \EndFor 
        \State \textbf{return} $\textit{CNF}^{\prime}$
    \EndProcedure
    \Procedure{$\textit{CNFGenerate}^{\prime}$}{$G, \textit{CNF}^{\prime}, h, r$}
        \State $\textit{NR}_h = \{r_j \mid (h, r_j, e_k) \in G, e_k \in E_G\}$
        \State $\textit{NR}_{\textit{max}}= \textit{NR}_h \cap \textit{CNF}^{\prime}[r]$\label{alg_line_opt_sim}
        \State \textbf{return} $\textit{NR}_{\textit{max}}$
    \EndProcedure
    \end{algorithmic}
\end{algorithm}

    In summary, we define \textbf{Context Pooling} in line with the node drop pooling approach, structured as follows:

    (\romannumeral1)~Score generator: 
    \begin{equation*}
        S^l_{\textit{context}}(r^{l},r^{l-1})= 
            \textit{Rel}(\{r^{l}\}, r_{\textit{inv}}^{l-1}),   
    \end{equation*}
    where $(e^{l-2},r^{l-1},e^{l-1}) \in G_{\textit{context}}^{l-1}, r^{l} \in \textit{NR}_{e^{l-1}}$.

    (\romannumeral2)~Node selector:
    \begin{equation*}
        \textit{NR}_{\textit{max}}^{l}(r^{l-1})=\{r^l | S^l_{\textit{context}}(r^{l},r^{l-1})>R_0\}.
    \end{equation*}

    (\romannumeral3)~Graph Coarsening:
    \begin{equation*}
            \begin{aligned}
        G'(r^{l-1}) = \{(e^{l-1},r^l,e^l)|  (e^{l-1},r^l,e^l) \in G \cap r^l \in \textit{NR}_{\textit{max}}^{l}(r^{l-1})\},
    \end{aligned}
    \end{equation*}
    \begin{equation*}
            G_{\textit{context}}^{l} = \bigcup_{r^{l-1} \in R_{G_{\textit{context}}^{l-1}}} G'(r^{l-1}).
    \end{equation*}

    \subsection{Generic Context Pooling GNN Architecture}\label{sec_context_GNN}

    To integrate Context Pooling with GNN-based models for enhanced link prediction, we propose a generic framework (see Figure~\ref{fig2}).
    
    First, we generate the query-specific graph:
    \begin{equation}
                G_{\textit{context}}=\bigcup_{l=1,2,...,L} G_{\textit{context}}^{l}.
    \end{equation}
    
    We then perform aggregation on both the original and query-specific graphs, respectively:
    \begin{equation}
            \textit{Agg}^l=\textit{Aggregation}(G,X^{l-1},W^{l-1}),
    \end{equation}
    \begin{equation}  \textit{Agg}^l_{\textit{context}}=\textit{Aggregation}(G_\textit{context},X^{l-1},W^{l-1}).
    \end{equation}
    
    In the update phase, we concatenate the embeddings from both graphs and apply an update function:
    \begin{equation}
            X^l=\textit{Update}(\textit{Concat}(\textit{Agg}^l,\textit{Agg}^l_{\textit{context}}),X^{l-1},W^{l-1}).
    \end{equation}
    
    Lastly, we employ the final embedding obtained for making predictions:
    
    \begin{equation}
        \textit{S}_{(h,r,t)}=\textit{Score}(X_h^L,W_r^L,X_t^L).
    \end{equation}
    
\begin{table*}[!t]
\centering
\caption{Transductive link prediction results on  WN18RR, FB15k-237 and NELL-995}\label{table_transductive_relation_prediction}
\begin{tabular}{lcccccccccccc}
\toprule
           & \multicolumn{4}{c}{\textbf{WN18RR}}                                     & \multicolumn{4}{c}{\textbf{FB15k-237}}                                & \multicolumn{4}{c}{\textbf{NELL-995}}                                  \\
\textbf{}  & \textbf{V1}    & \textbf{V2}    & \textbf{V3}          & \textbf{V4}    & \textbf{V1}    & \textbf{V2}    & \textbf{V3}    & \textbf{V4}    & \textbf{V1}    & \textbf{V2}    & \textbf{V3}    & \textbf{V4}    \\
\midrule
\multicolumn{13}{c}{\textbf{MRR}}                                                                                                                                                                                            \\
\textbf{Neural LP}  & 0.549          & 0.580          & 0.518          & 0.550          & 0.304               & 0.364          & 0.347          & 0.328          & 0.374          & 0.302          & 0.259          & 0.268          \\
\textbf{DRUM}       & \underline{0.584}    & 0.585          & 0.519          & 0.574          & 0.317               & 0.377          & 0.357          & 0.333          & 0.413          & 0.320          & 0.274          & 0.292          \\
\textbf{CompGCN}    & 0.160          & 0.211          & 0.162          & 0.178          & 0.339               & 0.318          & 0.285          & 0.269          & 0.212          & 0.289          & 0.260          & 0.264          \\
\textbf{AnyBURL}    & 0.185          & 0.234          & 0.183          & 0.195          & 0.248               & 0.253          & 0.248          & 0.248          & 0.190          & 0.215          & 0.196          & 0.223          \\
\textbf{GRAIL}      & 0.455          & 0.486          & 0.212          & 0.448          & 0.252               & 0.253          & 0.218          & 0.226          & 0.346          & 0.363          & 0.323          & 0.296          \\
\textbf{NBFNet}     & 0.418          & 0.393          & 0.515          & 0.266          & 0.495               & 0.473          & 0.438          & 0.415          & 0.503          & 0.422          & 0.393          & 0.400          \\
\textbf{RED-GNN}    & 0.574          & \underline{0.597}    & \underline{0.561}    & \underline{0.581}    & 0.506               & \underline{0.505}    & \underline{0.474}    & \textbf{0.456} & \underline{0.516}    & \underline{0.503}    & \underline{0.466}    & \underline{0.475}    \\ \hdashline
\textbf{NBFNet+CP}  & 0.435          & 0.439          & 0.510          & 0.331          & \textbf{0.513}         & 0.480          & 0.447          & 0.421          & 0.515          & 0.449          & 0.407          & 0.420          \\
\textbf{RED-GNN+CP} & \textbf{0.594} & \textbf{0.612} & \textbf{0.573} & \textbf{0.593} & \underline{0.512}      & \textbf{0.508} & \textbf{0.478} & \underline{0.450}    & \textbf{0.524} & \textbf{0.516} & \textbf{0.468} & \textbf{0.484} \\
\midrule
\multicolumn{13}{c}{\textbf{Hit@1 (\%)}}                                                                                                                                                                                     \\
\textbf{Neural LP}  & 47.0           & 52.1           & 45.4           & 47.5           & 23.9                & 28.7           & 27.0           & 25.3           & 28.9           & 22.9           & 18.7           & 20.2           \\
\textbf{DRUM}       & 52.1           & 52.5           & 45.5           & 51.3           & 26.0                & 30.5           & 28.3           & 26.0           & 33.8           & 25.4           & 20.2           & 22.8           \\
\textbf{CompGCN}    & 13.1           & 17.8           & 13.7           & 15.3           & 29.2                & 24.7           & 21.6           & 19.5           & 16.1           & 22.0           & 17.8           & 20.1           \\
\textbf{AnyBURL}    & 16.1           & 19.5           & 15.8           & 16.6           & 22.3                & 19.2           & 18.7           & 18.6           & 16.2           & 15.1           & 13.2           & 16.8           \\
\textbf{GRAIL}      & 37.5           & 40.8           & 10.8           & 35.8           & 39.5                & 39.5           & 34.3           & 31.1           & 24.6           & 25.0           & 23.2           & 20.5           \\
\textbf{NBFNet}     & 31.9           & 29.3           & 43.4           & 18.8           & 40.4                & 37.5           & 33.9           & 31.5           & 41.6           & 32.2           & 29.1           & 30.1           \\
\textbf{RED-GNN}    & \underline{53.1}     & \underline{55.0}     & \underline{50.9}     & \underline{53.2}     & 43.7                & \underline{41.1}     & \underline{37.5}     & \textbf{35.5}  & \underline{42.3}     & \underline{40.5}     & \textbf{36.6}  & \underline{38.2}     \\\hdashline
\textbf{NBFNet+CP}  & 34.1           & 35.0           & 43.6           & 20.2           & \textbf{44.6} & 38.1           & 34.5           & 31.7           & \underline{42.3}     & 34.5           & 29.8           & 32.0           \\
\textbf{RED-GNN+CP} & \textbf{54.8}  & \textbf{56.5}  & \textbf{52.0}  & \textbf{54.4}  & \textbf{44.6} & \textbf{41.9}  & \textbf{37.6}  & \underline{35.1}     & \textbf{43.8}  & \textbf{41.8}  & \underline{36.4}     & \textbf{38.5}  \\
\bottomrule
\end{tabular}
\end{table*}

    \begin{table*}[htbp]
\centering
\caption{Inductive link prediction results on  WN18RR, FB15k-237 and NELL-995}\label{table_inductive_relation_prediction}
\begin{tabular}{lcccccccccccc}
\toprule
           & \multicolumn{4}{c}{\textbf{WN18RR}}                                     & \multicolumn{4}{c}{\textbf{FB15k-237}}                                & \multicolumn{4}{c}{\textbf{NELL-995}}                                  \\
\textbf{}  & \textbf{V1}    & \textbf{V2}    & \textbf{V3}          & \textbf{V4}    & \textbf{V1}    & \textbf{V2}    & \textbf{V3}    & \textbf{V4}    & \textbf{V1}    & \textbf{V2}    & \textbf{V3}    & \textbf{V4}    \\
\midrule
\multicolumn{13}{c}{\textbf{MRR}}                                                                                                                                                                                            \\
\textbf{Neural LP}  & 0.649          & 0.635          & 0.361                & 0.628          & 0.325          & 0.389          & 0.400          & 0.396          & 0.610          & 0.361          & 0.367          & 0.261          \\
\textbf{DRUM}       & 0.666          & 0.646          & 0.380                & 0.627          & 0.333          & 0.395          & 0.402          & 0.410          & \underline{0.628}    & 0.365          & 0.375          & 0.273          \\
\textbf{CompGCN}    & 0.681          & 0.653          & 0.407                & 0.603          & 0.350          & 0.451          & \underline{0.407}    & 0.383          & 0.487          & 0.403          & 0.347          & 0.333          \\
\textbf{AnyBURL}    & 0.391          & 0.548          & 0.340                & 0.550          & 0.355          & 0.437          & 0.338          & 0.318          & 0.477          & \underline{0.431}    & 0.391          & 0.332          \\
\textbf{GRAIL}      & 0.627          & 0.625          & 0.323                & 0.553          & 0.279          & 0.276          & 0.251          & 0.227          & 0.481          & 0.297          & 0.322          & 0.262          \\
\textbf{NBFNet}     & 0.684          & 0.652          & \textbf{0.425} & 0.604          & 0.307          & 0.369          & 0.331          & 0.305          & 0.544          & 0.410          & 0.425          & 0.287          \\
\textbf{RED-GNN}    & \underline{0.693}    & \underline{0.690}    & 0.422                & \textbf{0.651} & \textbf{0.318} & \underline{0.452}    & \underline{0.407}    & \underline{0.422}    & 0.618          & 0.381 & 0.393 & 0.342    \\ \hdashline
\textbf{NBFNet+CP}  & \underline{0.702}    & 0.658          & 0.393                & 0.605          & \textbf{0.387} & 0.423          & 0.400          & 0.377          & 0.560          & \textbf{0.442} & \textbf{0.457} & \underline{0.372}    \\
\textbf{RED-GNN+CP} & \textbf{0.708} & \textbf{0.694} & \textbf{0.425} & \underline{0.648}    & \underline{0.383}    & \textbf{0.463} & \textbf{0.447} & \textbf{0.427} & \textbf{0.634} & 0.430          & \underline{0.426}    & \textbf{0.382} \\
\midrule
\multicolumn{13}{c}{\textbf{Hit@1 (\%)}}                                                                                                                                                                                     \\
\textbf{Neural LP}  & 59.2           & 57.5           & 30.4                 & 58.3           & 24.3           & 28.6           & 30.9           & 28.9           & 50.0           & 24.9           & 26.7           & 13.7           \\
\textbf{DRUM}       & 61.3           & 59.5           & 33.0                 & 58.6           & 24.7           & 28.4           & 30.8           & 30.9           & 50.0           & 27.1           & 26.2           & 16.3           \\
\textbf{CompGCN}    & 62.0           & 59.0           & 36.0                 & 54.5           & 29.4           & \underline{36.2}     & 31.5           & 29.5           & 38.5           & 30.9           & 25.3           & 23.6           \\
\textbf{AnyBURL}    & 23.3           & 41.7           & 26.4                 & 45.8           & \underline{30.1}     & 33.7           & 25.5           & 23.3           & 38.6           & \textbf{33.0}  & 30.7           & 23.6           \\
\textbf{GRAIL}      & 55.4           & 54.2           & 27.8                 & 44.3           & 20.5           & 20.2           & 16.5           & 14.3           & 42.5           & 19.9           & 22.4           & 15.3           \\
\textbf{NBFNet}     & 59.2           & 57.5           & 30.4                 & 57.4           & 19.0           & 22.9           & 20.6           & 18.5           & 50.0           & 27.1           & 26.2           & 23.3           \\
\textbf{RED-GNN}    & \underline{64.6}     & \underline{63.3}     & \underline{36.4}           & \textbf{60.6}  & 24.7           & 35.6           & \underline{32.2}     & \underline{32.7}     & 50.0  & 28.8           & 30.0  & 23.8   \\ \hdashline
\textbf{NBFNet+CP}  & 62.2           & 57.8           & 32.9                 & 53.7           & 28.5           & 28.3           & 26.9           & 24.4           & \textbf{52.0}  & 31.2           & \textbf{35.0}  & \underline{25.7}     \\
\textbf{RED-GNN+CP} & \textbf{66.0}  & \textbf{63.7}  & \textbf{36.5}        & \underline{60.3}     & \textbf{31.6}  & \textbf{36.8}  & \textbf{35.8}  & \textbf{33.0}  & \underline{51.7}     & \underline{32.9}     & \underline{32.4}     & \textbf{27.8}  \\
\bottomrule
\end{tabular}
\end{table*}

\section{Experiments}

    We adopt four versions of three datasets for both transductive and inductive experimental settings, released by~\cite{GRAIL}. These subsets of data were derived from WN18RR~\cite{WN18RR}, FB15k-237~\cite{FB15k237}, and NELL-995~\cite{NELL995}, including large KGs such as WN18RR-V3, FB15k-237-V4, and NELL-995-V3 (see Appendix~\ref{sec_dataset_appendix}). Rather than the setup applied in ~\cite{GRAIL}, where 49 candidate entities were randomly selected as negative entities for testing, we apply a more challenging configuration introduced by~\cite{RED-GNN}. Specifically, in the testing phase, each query triple designates the ground-truth entity as positive, while all the other entities in the KG are treated as negative entities. 

    We apply Context Pooling onto two SOTA inductive link prediction approaches, namely NBFNet~\cite{NBFNet} and RED-GNN~\cite{RED-GNN}, to assess whether their performance NBFNet+CP and RED-GNN+CP is elevated\footnote{All datasets and code are available online: https://github.com/ZhixiangSu/Context-Pooling/}. We adopt the hyper-parameter values from the published NBFNet and RED-GNN models for a fair comparison. The precision and recall thresholds for Context Pooling are set within the range of $[10^{-5},10^{-1}]$. All experiments are conducted using PyTorch on two NVIDIA Tesla A100 GPUs with 40GB of RAM.

    In all the following transductive and inductive experiments, we apply Algorithm~\ref{algo_optimized_context_family} to demonstrate the efficacy of Context Pooling. Additionally, we conduct a case study to show the logically relevant neighbors on FB15k-237-V4 (see Section~\ref{sec_case_study_appendix}). 
    Furthermore, to compare the performance of the optimized algorithm (i.e., Algorithm~\ref{algo_optimized_context_family}) with its unoptimized counterpart (i.e., Algorithm~\ref{algo_context_familiy}), we conduct an ablation study in Section~\ref{sec_ablation_study_appendix}.

    We benchmark NBFNet+CP and RED-GNN+CP against SOTA inductive GNN-based and path-based models, including  Neural~LP~\cite{NeuralLP}, DRUM~\cite{DRUM}, CompGCN~\cite{CompGCN}, AnyBURL~\cite{AnyBURL} and GRAIL~\cite{GRAIL}. Traditional graph pooling approaches cannot be easily integrated with the GNN-based models to perform link prediction (see Section~\ref{sec_related_work}), thus, are not compared in this work.

\begin{table*}[ht]
    \centering
    \caption{Examples of Context Neighbors extracted from FB15k-237-V4}\label{table_case_study_appendix}
    \begin{tabular}{lcc}
    \toprule
    Query Relation                      & Context Neighbor - Precision    & Context Neighbor - Recall              \\ \midrule \midrule
    $\textit{award\_honor/award\_winner}$ & $\textit{award\_category/category\_of}$     & $\textit{award\_category/category\_of}$          \\
                                        & $\textit{award\_honor/ceremony}$           & $\textit{award\_honor/ceremony}$                \\
                                        & $\textit{award\_nomination/award}^{-1}$    & $\textit{award\_nomination/nominated\_for}$      \\
                                        & $\textit{award\_honor/award}^{-1}$         &                                                \\ \midrule
    $\textit{film/language}$            & $\textit{film/film\_art\_direction\_by}$     & $\textit{award\_nomination/nominated\_for}^{-1}$ \\
                                        & $\textit{award\_nomination/nominated\_for}$ & $\textit{actor/performance/film}^{-1}$         \\
                                        &                                           & $\textit{film/estimated\_budget/currency}$      \\
                                        &                                           & $\textit{film/genre}$                          \\
                                        &                                           & $\textit{film/film\_release\_region}$            \\
                                        &                                & ...                                   \\  \midrule
    $\textit{location/contains}$        & $\textit{bibs\_location/country}^{-1}$     & $\textit{location/contains}^{-1}$              \\
                                        & $\textit{country/second\_level\_divisions}$ & $\textit{/location/location/time\_zones}$       \\
                                        & $\textit{location/partially\_contain}$     & $\textit{place\_lived/location}^{-1}$           \\
                                        &                                           & $\textit{/film/film/country}^{-1}$               \\ 
                                        &                                & ...                                   \\ \bottomrule
    \end{tabular}
    \end{table*}

    \subsection{Experimental Results}
    
    \subsubsection{Transductive Link Prediction}

    In transductive link prediction, we divide the training triples into two parts. One-third of randomly selected triples are used as established facts for creating the training graph, while the remaining two-thirds training triples are utilized as queries during training. In the testing phase, all query entities and relations are seen during training.

    Table~\ref{table_transductive_relation_prediction} reports the transductive link prediction results. The highest scores representing the best performance are highlighted in bold, while the second-bests are underlined. Notably, RED-GNN+CP outperforms all the others in 10 out of 12 settings in terms of MRR and in 10 out of 12 settings in terms of Hit@1. In the remaining 3 of 4 settings, RED-GNN+CP is ranked as second-best. The most significant performance elevation is observed regarding WN18RR-V2 comparing to NBFNet, with an 11.7\% increase in MRR and a 19.4\% increase in Hit@1.

    
    \subsubsection{Inductive Link Prediction}

    In inductive link prediction, there is zero overlap of entities between the training and testing graphs. Furthermore, all entities in the testing queries are unseen during training.

    The results for inductive link prediction are reported in Table~\ref{table_inductive_relation_prediction}. RED-GNN+CP outperforms all the others in 8 out of 12 settings in terms of MRR and in 8 out of 12 settings in terms of Hit@1. In the remaining 5 of 8 settings, NBFNet+CP achieves the best performance. The most significant performance elevation is observed regarding NELL-995-V4, with an 11.7\% increase in MRR and a 16.8\% increase in Hit@1. 

    \subsubsection{Paired t-test}
    Comparing to both NBFNet and RED-GNN, incorporating Context Pooling leads to performance elevation in 88 of 96 transductive and inductive settings. For transductive settings and MRR in inductive settings, the paired t-tests show that p-values are less than $ 10^{-3}$.  In the case of Hit@1 for inductive settings, the p-value is derived as $ 2 \times 10^{-3}$. Such results underscore Context Pooling's capability of elevating performance in both transductive and inductive link prediction applications.

    \subsection{Case Study}\label{sec_case_study_appendix}
    To demonstrate the logical relevance between logically relevant neighbors and the query relation, we present the representative case study results on the transductive dataset FB15k-237-V4 comprising a large number of relations in Table~\ref{table_case_study_appendix}.
    
    As shown in Table~\ref{table_case_study_appendix}, for a query regarding award winners, both neighborhood precision and neighborhood recall (see Definitions~\ref{def_precision} and~\ref{def_recall}) are focused on award-related aspects, including award categories, ceremonies, and nomination specifics.  In the context of film-related queries, neighbors encompass elements such as art direction, awards, actors, budget, genre, and release regions. For queries concerning locations, neighbors are identified as countries, regional divisions, time zones, and residential areas.
    
    Such case study results suggest that in datasets involving a large number of relations (219 for FB15k-237-V4), Context Pooling can effectively retain a much smaller number of logically relevant neighbors, while filtering out those logically irrelevant or illogical ones, to elevate the link prediction performance (see Tables~\ref{table_transductive_relation_prediction} and~\ref{table_inductive_relation_prediction}).
\subsection{Ablation Study}\label{sec_ablation_study_appendix}
    To demonstrate the difference in performance and efficiency between the optimized and unoptimized $\textit{CNF}$ generation algorithms (see Algorithms~\ref{algo_context_familiy} and~\ref{algo_optimized_context_family}), we conduct an ablation study on two relatively smaller inductive datasets using RED-GNN+CP: FB15k-237-V1 and WN18RR-V1 (see Table~\ref{table_ablation_study}).
    We choose smaller datasets because the performance gap will definitely increase significantly with larger datasets.
    
    As discussed in Section~\ref{sec_context_pooling}, the computational time required for Algorithm~\ref{algo_context_familiy} grows exponentially with the quantity of relations ($O(2^{|R_G|})$). To make it more feasible in practical scenarios, we restrict the number of relations to $[4,6)$. This restriction reduces the complexity to $O(\Sigma_{i=4,5} C_{|R_G|}^i)$, where $C$ represents the combinatorial number. Nevertheless, this complexity is still significantly higher than that of Algorithm~\ref{algo_optimized_context_family} ($O(|R_G|^2)$). 

    On WN18RR-V1, the unoptimized Algorithm~\ref{algo_context_familiy} does not surpass the optimized Algorithm~\ref{algo_optimized_context_family} in terms of performance (-1.0\% for MRR and -1.2\% for Hit@1). This may be caused by noise and data bias within the dataset. On FB15k-237-V1, the unoptimized algorithm shows a higher level of performance (+3.4\% for MRR and +5.1\% for Hit@1). Notably, for both datasets, the optimized algorithm demonstrates a significantly lower time consumption, namely 14.7 times faster on WN18RR-V1 and 20.6 times faster on FB15k-237-V1. Therefore, we hold the view that the optimized algorithm can significantly reduce running time while maintaining a relatively high level of performance, enabling it to efficiently deal with relatively larger KGs.

    \begin{table}[!t]
    \small
    \centering
    \caption{Ablation study results}\label{table_ablation_study}
    \begin{tabular}{llcc} \toprule
           &             & WN18RR-V1 & FB15k-237-V1 \\ \midrule \midrule
MRR                                                                    & Optimized   & 0.708     & 0.383        \\
       & Unoptimized & 0.701     & 0.396        \\ \midrule
\multirow{2}{*}{\begin{tabular}[c]{@{}l@{}}Hit@1 \\ (\%)\end{tabular}} & Optimized   & 0.660     & 0.316        \\
       & Unoptimized & 0.652     & 0.332        \\ \midrule
\multirow{2}{*}{\begin{tabular}[c]{@{}l@{}}Time \\ (s)\end{tabular}}   & Optimized   & 412       & 1,197        \\
                                                                       & Unoptimized & 6,452     & 24,669        \\
\bottomrule
    \end{tabular}
\end{table}

\section{Conclusion}
    
    This paper presents Context Pooling for enhancing GNN-based link prediction in KGs. Context Pooling boosts performance by identifying logically relevant neighbors through neighborhood precision and neighborhood recall and forming query-specific graphs adaptable to unseen entities. Context Pooling is generic and is applied to enhance two SOTA methods to show its effectiveness. Experimental results demonstrate the superior performance of our proposed Context Pooling method in link prediction tasks across various KG datasets.
    
    Going forward, we plan to apply Context Pooling to domain-specific knowledge graphs, including healthcare, finance, and social networks.

\section*{Acknowledgment}
    This research is supported, in part, by the National Research Foundation, Prime Minister's Office, Singapore under its NRF Investigatorship Programme (NRFI Award No. NRF-NRFI05-2019-0002). Any opinions, findings and conclusions or recommendations expressed in this material are those of the author(s) and do not reflect the views of National Research Foundation, Singapore. This research is also supported, in part, by the RIE2025 Industry Alignment Fund – Industry Collaboration Projects (IAF-ICP) (Award I2301E0026), administered by A*STAR, as well as supported by Alibaba Group and NTU Singapore through Alibaba-NTU Global e-Sustainability CorpLab (ANGEL).

\bibliographystyle{ACM-Reference-Format}
\bibliography{ref}
\begin{appendix}
\section{Proof of Theorem}\label{sec_appendix_proof}
    \begin{myTheo}\label{theo_propto_appendix}
        Given Assumption~\ref{assum_independent}, for a query relation $r$ and $\forall r_1, r_2 \in \textit{NR}_r$, the relationship between the combined and individual $\textit{Rel}^{\textit{pre}}$ and $\textit{Rel}^{\textit{rec}}$ can be expressed as follows:
        \begin{equation}\label{eq_propto_theo_1_appendix}
        \begin{aligned}
            \textit{Rel}^{\textit{pre}}(\{r_1, r_2\},r) \propto \textit{Rel}^{\textit{pre}}(\{r_1\},r)*\textit{Rel}^{\textit{pre}}(\{r_2\},r), \\
            \textit{Rel}^{\textit{rec}}(\{r_1, r_2\},r) \propto \textit{Rel}^{\textit{rec}}(\{r_1\},r)*\textit{Rel}^{\textit{rec}}(\{r_2\},r).
        \end{aligned}
        \end{equation}
        Furthermore, for any subset $\textit{NR}' \subseteq \textit{NR}_r$, the following can be deduced:
        \begin{equation}\label{eq_propto_theo_2_appendix}
        \begin{aligned}
            \textit{Rel}^{\textit{pre}}(\textit{NR}',r) \propto \prod_{r_i \in \textit{NR}'} \textit{Rel}^{\textit{pre}}(\{r_i\},r), \\
            \textit{Rel}^{\textit{rec}}(\textit{NR}',r) \propto \prod_{r_i \in \textit{NR}'} \textit{Rel}^{\textit{rec}}(\{r_i\},r).
        \end{aligned}
        \end{equation}
    \end{myTheo}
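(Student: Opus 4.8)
The plan is to recast both relevance metrics as conditional probabilities over a uniformly random entity, after which the statement reduces to Bayes' rule plus the factorizations in Assumption~\ref{assum_independent}. Concretely, fix the query relation $r$ and draw $e \in E_G$ uniformly at random; for any set of relations $\textit{NR}'$ let ``$\textit{NR}'$'' abbreviate the event $\{\textit{NR}' \subseteq \textit{NR}_e\}$, so that ``$r$'' is $\{r \in \textit{NR}_e\}$ and ``$\textit{NR}' \cup \{r\}$'' is the intersection of ``$\textit{NR}'$'' and ``$r$''. Dividing every set cardinality in Definitions~\ref{def_precision} and~\ref{def_recall} by $|E_G|$ gives
\begin{equation*}
\begin{aligned}
\textit{Rel}^{\textit{pre}}(\textit{NR}',r) &= \frac{P(\textit{NR}',r)}{P(\textit{NR}')} = P(r \mid \textit{NR}'), \\
\textit{Rel}^{\textit{rec}}(\textit{NR}',r) &= \frac{P(\textit{NR}',r)}{P(r)} = P(\textit{NR}' \mid r),
\end{aligned}
\end{equation*}
and Assumption~\ref{assum_independent} is precisely a (conditional) independence statement in this same space.

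Next I would dispatch the two-relation identities~(\ref{eq_propto_theo_1_appendix}). For recall, $\textit{Rel}^{\textit{rec}}(\{r_1,r_2\},r) = P(r_1,r_2 \mid r) = P(r_1 \mid r)\,P(r_2 \mid r) = \textit{Rel}^{\textit{rec}}(\{r_1\},r)\,\textit{Rel}^{\textit{rec}}(\{r_2\},r)$ by the conditional half of the assumption --- in fact an exact equality, with proportionality constant $1$. For precision, Bayes' rule followed by both halves of the assumption gives
\begin{equation*}
\begin{aligned}
\textit{Rel}^{\textit{pre}}(\{r_1,r_2\},r) &= P(r \mid r_1,r_2) = \frac{P(r_1,r_2 \mid r)\,P(r)}{P(r_1,r_2)} \\
&= \frac{P(r_1 \mid r)\,P(r_2 \mid r)\,P(r)}{P(r_1)\,P(r_2)}.
\end{aligned}
\end{equation*}
Substituting $P(r_i \mid r) = P(r_i,r)/P(r)$ and then recognizing $P(r_i,r)/P(r_i) = P(r \mid r_i) = \textit{Rel}^{\textit{pre}}(\{r_i\},r)$ collapses the right-hand side to $P(r)^{-1}\,\textit{Rel}^{\textit{pre}}(\{r_1\},r)\,\textit{Rel}^{\textit{pre}}(\{r_2\},r)$; since $P(r)$ depends on $r$ only, this is the asserted proportionality.

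For an arbitrary subset $\textit{NR}' = \{r_1,\dots,r_k\} \subseteq \textit{NR}_r$ I would iterate the assumption in the Markov-chain spirit indicated in the text: peeling off one relation at a time and treating the conjunction $r_1 \land \cdots \land r_j$ as a single event in the role of ``$r_1$'' and $r_{j+1}$ in the role of ``$r_2$'' yields $P(\textit{NR}') = \prod_i P(r_i)$ and $P(\textit{NR}' \mid r) = \prod_i P(r_i \mid r)$. Feeding these into $\textit{Rel}^{\textit{rec}}(\textit{NR}',r) = P(\textit{NR}' \mid r)$ gives $\prod_i \textit{Rel}^{\textit{rec}}(\{r_i\},r)$ exactly, and feeding them into $\textit{Rel}^{\textit{pre}}(\textit{NR}',r) = P(\textit{NR}' \mid r)\,P(r)/P(\textit{NR}')$ together with the same identity $P(r_i,r)/P(r_i) = \textit{Rel}^{\textit{pre}}(\{r_i\},r)$ gives $P(r)^{1-k}\prod_i \textit{Rel}^{\textit{pre}}(\{r_i\},r)$ --- i.e.~(\ref{eq_propto_theo_2_appendix}), with proportionality constant $P(r)^{1-k}$, which is fixed once $r$ and $|\textit{NR}'|$ are.

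The step I expect to be the real obstacle is the last one: Assumption~\ref{assum_independent} as written is only pairwise, whereas the full product over $\textit{NR}'$ needs a mutual-independence-type property. I would handle this exactly as the ``Markov chain--inspired'' phrasing suggests --- applying the assumption inductively to the aggregated events $r_1 \land \cdots \land r_j$ --- and flag that this is the intended reading; once that is granted, the remainder is routine bookkeeping with Bayes' rule and with tracking which factors are constants in $r$.
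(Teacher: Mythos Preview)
Your proposal is correct and follows essentially the same route as the paper: identify $\textit{Rel}^{\textit{pre}}(\textit{NR}',r)=P(r\mid\textit{NR}')$ and $\textit{Rel}^{\textit{rec}}(\textit{NR}',r)=P(\textit{NR}'\mid r)$, apply Bayes' rule together with both halves of Assumption~\ref{assum_independent} to obtain the two-relation identities (with exact constant $1$ for recall and $1/P(r)$ for precision), and then extend to arbitrary $\textit{NR}'$ by induction. Your write-up is in fact more explicit than the paper's in setting up the underlying probability space and in flagging that the inductive step tacitly strengthens the pairwise assumption to a mutual one; the paper simply invokes ``mathematical induction'' without comment on this point.
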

    \begin{proof}
    \begin{equation}
        \begin{aligned}
        \textit{Rel}^{\textit{pre}}(\{r_1, r_2\},r) 
        =& P(r|r_1,r_2) \\
        =& \frac{P(r_1,r_2,r)}{P(r_1,r_2)} \\
        =& \frac{P(r_1,r_2,r)}{P(r)} \frac{P(r)}{P(r_1)P(r_2)} \\
        =& P(r_1,r_2|r) \frac{P(r)}{P(r_1)P(r_2)} \\
        =& P(r_1|r)P(r_2|r)\frac{P(r)}{P(r_1)P(r_2)} \\
        =& \frac{P(r_1,r)}{P(r)} \frac{P(r_2,r)}{P(r)} \frac{P(r)}{P(r_1)P(r_2)} \\
        =& \frac{P(r_1,r)}{P(r_1)} \frac{P(r_2,r)}{P(r_2)} \frac{1}{P(r)} \\
        =& \frac{P(r|r_1)P(r|r_2)}{P(r)} \\
        =& \frac{\textit{Rel}^{\textit{pre}}(\{r_1\},r)*\textit{Rel}^{\textit{pre}}(\{r_2\},r)}{P(r)}.
        \end{aligned}
    \end{equation}
    
    \begin{equation}
        \begin{aligned}
        \textit{Rel}^{\textit{rec}}(\{r_1, r_2\},r) 
        =& P(r_1,r_2|r) \\
        =& P(r_1|r)P(r_2|r) \\
        =& \textit{Rel}^{\textit{rec}}(\{r_1\},r)*\textit{Rel}^{\textit{rec}}(\{r_2\},r).
        \end{aligned}
    \end{equation}
    Utilizing the principle of mathematical induction on~(\ref{eq_propto_theo_1_appendix}) enables us to logically derive the validity of~(\ref{eq_propto_theo_2_appendix}).
    \end{proof}
\vfill
\section{Datasets}\label{sec_dataset_appendix}
    The numbers of relations ($|R_G|$), entities ($|E_G|$) and triples (\#Triples) of the datasets used in this work are shown in Table~\ref{table_dataset_appendix}. In both transductive and inductive settings, the datasets WN18RR-V3, FB15k-237-V4, and NELL-995-V3 are typically recognized as large, comprising over 20,000 training and testing triples.
    \begin{table}[H]
        \centering
        \caption{Statistics of training (testing) datasets}\label{table_dataset_appendix}
        \begin{tabular}{lccc}
        \toprule
        $G$      & $|R_G|$   & $|E_G|$      & \#Triples    \\
        \midrule
        \multicolumn{4}{c}{\textbf{WN18RR}}                          \\ \midrule
        Trans-V1 & 9 (7)     & 2,746 (962)    & 5,410 (638)     \\
        Trans-V2 & 10 (9)    & 6,954 (2,788)  & 15,262 (1,868)  \\
        Trans-V3 & 11 (10)   & 12,078 (4,605) & 25,901 (3,152)  \\
        Trans-V4 & 9 (8)     & 3,861 (1,433)  & 7,940 (968)     \\
        Ind-V1   & 9 (9)     & 2,746 (922)    & 6,678 (1,911)   \\
        Ind-V2   & 10 (10)   & 6,954 (2,923)  & 18,968 (4,863)  \\
        Ind-V3   & 11 (11)   & 12,078 (5,084) & 32,150 (7,470)  \\
        Ind-V4   & 9 (9)     & 3,861 (7,208)  & 9,842 (15,157)  \\ \midrule
        \multicolumn{4}{c}{\textbf{FB15k-237}}                  \\ \midrule
        Trans-V1 & 180 (102) & 1,594 (550)    & 4,245 (492)     \\
        Trans-V2 & 200 (140) & 2,608 (1,142)  & 9,739 (1,180)   \\
        Trans-V3 & 215 (179) & 3,668 (1,871)  & 17,986 (2,214)  \\
        Trans-V4 & 219 (192) & 4,707 (2,627)  & 27,203 (3,361)  \\
        Ind-V1   & 183 (146) & 2,000 (1,500)  & 5,226 (2,404)   \\
        Ind-V2   & 203 (176) & 3,000 (2,000)  & 12,085 (5,092)  \\
        Ind-V3   & 218 (187) & 4,000 (3,000)  & 22,394 (9,137)  \\
        Ind-V4   & 222 (204) & 5,000 (3,500)  & 33,916 (14,554) \\ \midrule
        \multicolumn{4}{c}{\textbf{NELL-995}}                   \\ \midrule
        Trans-V1 & 14 (14)   & 3,103 (553)    & 4,687 (439)     \\
        Trans-V2 & 88 (60)   & 2,564 (841)    & 8,219 (968)     \\
        Trans-V3 & 142 (94)  & 4,647 (1,473)  & 16,393 (1,873)  \\
        Trans-V4 & 76 (46)   & 2,092 (699)    & 7,546 (867)     \\
        Ind-V1   & 14 (14)   & 10,915 (225)   & 5,540 (1,034)   \\
        Ind-V2   & 88 (79)   & 2,564 (4,937)  & 10,109 (5,521)  \\
        Ind-V3   & 142 (122) & 4,047 (4,921)  & 20,117 (9,668)  \\
        Ind-V4   & 77 (61)   & 2,092 (3,294)  & 9,089 (8,520)   \\
        \bottomrule
        \end{tabular}
    \end{table}

\end{appendix}
\balance
\end{document}